\newcommand{\rowheight}{1em}
\setlist[itemize]{leftmargin=.2in}
\newcommand{\qwen}{\mbox{\textsc{Qwen}}\xspace}
\newcommand{\llama}{\mbox{\textsc{LLama}}\xspace}
\newtcolorbox{AIbox}[2][]{aibox,title=#2,#1}
\definecolor{lightblue}{rgb}{0.22,0.45,0.70}% light blue
\newtheoremstyle{boldheader}% 样式名称
{3pt}% 上方空间
{3pt}% 下方空间
{\itshape}% 内容字体
{}% 缩进量
{\bfseries}% 定理头字体
{ }% 定理头后的标点
{ }% 定理头后的空白
{\thmname{#1}\thmnumber{ #2}\thmnote{ \bfseries(#3)}}% 定理头规格
\theoremstyle{boldheader}
\newtheorem{theorem}{Theorem}
\newtheorem{lemma}[theorem]{Lemma}  % [theorem]使编号与theorem连续
\newtheorem{definition}[theorem]{Definition}
\definecolor{midnightgreen}{rgb}{0.0, 0.29, 0.33}
\definecolor{deepgreen}{HTML}{0aa344}
\definecolor{deeppurple}{HTML}{7030a0}
\definecolor{deepblue}{HTML}{171d91}
\definecolor{brown}{HTML}{843c0c}
\definecolor{shadered}{HTML}{ffe5e5}
\definecolor{shadegreen}{HTML}{e5f7ed}
\definecolor{msftBlack}{RGB}{0,0,0}
\definecolor{lightred}{RGB}{255,163,163}
\definecolor{deepred}{RGB}{146,0,0}
\newcommand{\finding}[1]{
  \begin{bclogo}[couleur=msftBlack!05, epBord=1, arrondi=0.2, logo=\bcetoile, marge=5, ombre=true, blur, couleurBord=msftBlack!10, tailleOndu=1, sousTitre={\em #1}]{}
  \end{bclogo}
}
\title{Large Language Models as Computable Approximations to Solomonoff Induction}
\author{%
  Jun Wan \\
  UBS AG \\
  \texttt{jun.wan@ubs.com} \\
  % examples of more authors
  \And
  Lingrui Mei \\
  State Key Lab of AI Safety \\
  \texttt{meilingrui25b@ict.ac.cn} \\
  % \AND
  % Coauthor \\
  % Affiliation \\
  % Address \\
  % \texttt{email} \\
}
\begin{document}

\maketitle

\begin{abstract}
The rapid advancement of large language models (LLMs) calls for a rigorous theoretical framework to explain their empirical success. 
While significant progress has been made in understanding LLM behaviors, existing theoretical frameworks remain fragmented in explaining emergent phenomena through a unified mathematical lens.
We establish the first formal connection between LLM architectures and Algorithmic Information Theory (AIT) by proving two fundamental results: (1) the \textbf{training process} computationally \textbf{approximates Solomonoff prior} through loss minimization interpreted as program length optimization, and (2) \textbf{next-token prediction} implements \textbf{approximate Solomonoff induction}. 
We leverage AIT to provide a unified theoretical explanation for in-context learning, few-shot learning, and scaling laws.
Furthermore, our theoretical insights lead to a principled method for few-shot example selection that prioritizes samples where models exhibit lower predictive confidence. We demonstrate through experiments on diverse text classification benchmarks that this strategy yields significant performance improvements, particularly for smaller model architectures, when compared to selecting high-confidence examples.
Our framework bridges the gap between theoretical foundations and practical LLM behaviors, providing both explanatory power and actionable insights for future model development.
\end{abstract}

\section{Introduction}

Large Language Models (LLMs) have recently achieved significant advancements across multiple domains\citep{Brown2020GPT3, OpenAI2023GPT4, deepseekai2025deepseekv3technicalreport, qwen2025qwen25technicalreport}, and notably, their reasoning capabilities have improved substantially, as they can now generate intermediate reasoning steps, enhancing performance on complex tasks\citep{ Kojima2022LargeLanguage, deepseekai2025r1, qwq-32b-preview, kimiteam2025kimik15scalingreinforcement, skywork-or1-2025}. This unprecedented advancement has prompted researchers to seek theoretical frameworks that can systematically explain the emergent phenomena observed in these models\citep{wei2022emergentabilitieslargelanguage, nanda2022transformerlens, wang2023labelwordsanchorsinformation, meng2023locatingeditingfactualassociations, delétang2024languagemodelingcompression, zheng2024distributedrulevectorskey, ghandeharioun2024patchscopesunifyingframeworkinspecting,  luo2024understandingutilizationsurveyexplainability, rai2025practicalreviewmechanisticinterpretability}, yet providing a unified mathematical account for abilities like in-context learning\citep{dong2024surveyincontextlearning}, few-shot adaptation\citep{Brown2020GPT3}, and empirical scaling laws\citep{snell2024scalingllmtesttimecompute} remains a significant challenge for existing theories.

Foundational theories from computability and information theory offer potential avenues for deeper understanding. Notably, Algorithmic Information Theory (AIT)\citep{BLUM1967257, 10.1145/321386.321395} provides principles for universal sequence prediction based on algorithmic probability\citep{cover1989kolmogorov}. Key concepts within this framework, such as the Solomonoff prior and Solomonoff induction—formalized concepts\citep{solomonoff1964formal1, solomonoff1964formal2} originating from the work of Ray Solomonoff—offer a powerful lens for analyzing generative models. Their focus on sequence generation complexity provides a rigorous mathematical basis for universal prediction and inductive inference, thereby serving as the theoretical bedrock for our analysis.\citep{kolmogorov1965three, chaitin1966length, chaitin1977algorithmic, li2008introduction, downey2010algorithmic}.

In this work, we forge a novel and rigorously established theoretical bridge between the operational principles of LLMs and the foundational concepts of AIT. We demonstrate that LLMs can be understood not merely as statistical pattern matchers but as practical, computable instantiations of Solomonoff's idealized framework for universal induction. Our primary contribution is a constructive mathematical proof establishing that the standard LLM training paradigm—specifically, the minimization of prediction loss—serves as a computational approximation to the Solomonoff prior. This is achieved by reinterpreting the optimization process as a search for the shortest programs capable of generating the training data, thereby intrinsically linking learning efficiency to algorithmic compressibility. 
\begin{figure}[H]
  \centering
  \includegraphics[width=1\textwidth]{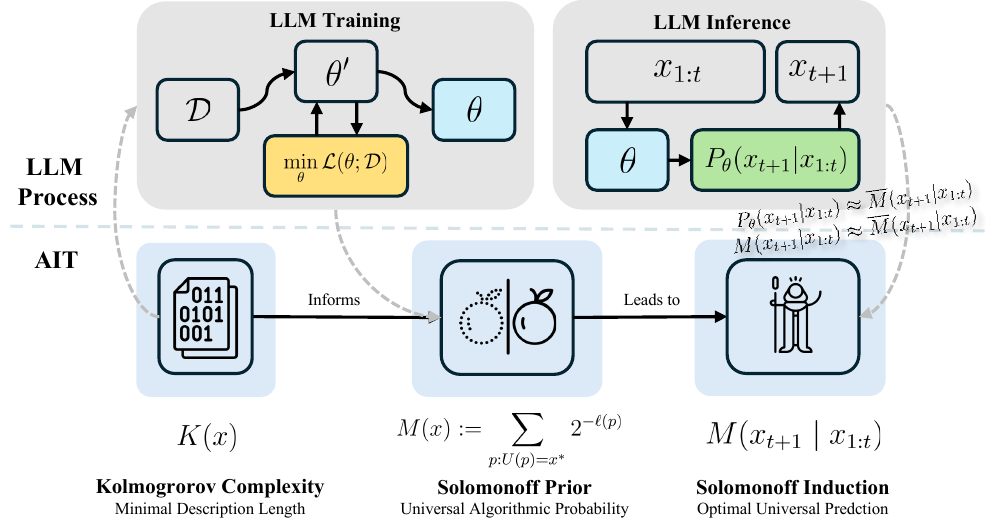}
  \vspace{-10pt}
  \caption{Conceptual diagram of our theoretical framework linking LLM processes to AIT. \textbf{LLM Process (Top):} Training optimizes parameters ($\theta'$) via loss minimization $\mathcal{L}(\theta;\mathcal{D})$, while inference uses $\theta$ to predict $x_{t+1}$ from $x_{1:t}$ via $P_{\theta}(x_{t+1}|x_{1:t})$. \textbf{AIT (Bottom):} Kolmogorov Complexity $K(x)$ informs the Solomonoff prior (approximated as $\overline{M}(x)$), which underlies Solomonoff induction $M(x_{t+1}|x_{1:t})$. Crucially, LLM training is shown to approximate the Solomonoff prior, and LLM inference's predictive distribution $P_{\theta}$ approximates Solomonoff induction (via $\overline{M}(x_{t+1}|x_{1:t})$), bridging LLM operations with AIT.}
\label{fig:pipeline}
\end{figure}
\vspace{-15pt}
Building upon this, we develop a formal argument showing that the next-token prediction mechanism inherent in LLMs forms a computable approximation of Solomonoff induction, which provides a robust theoretical underpinning for their remarkable generalization capabilities by casting predictive power as a form of principled inductive inference. Leveraging this established connection, our framework offers a unified theoretical lens through which diverse emergent LLM behaviors can be coherently understood as natural consequences of a system approximating universal induction. Finally, guided by these AIT-based insights, particularly the convergence properties of Solomonoff induction, we introduce and empirically validate a novel, principled method for selecting few-shot demonstration examples. This strategy posits that, for few-shot learning, data points exposing the model's current predictive weaknesses (i.e., instances where the model exhibits lower confidence in the correct prediction) are more valuable for rapid adaptation than reinforcing already well-learned patterns. Our comprehensive experiments, conducted on text classification benchmarks such as SMS spam detection, emotion recognition, and news categorization using LLMs, consistently demonstrate that prioritizing these lower-confidence samples yields significant performance improvements over selecting high-confidence examples, demonstrating our framework's practical utility and explanatory power. Collectively, these contributions both advance a deeper theoretical understanding of LLMs and also offer actionable insights for their continued development and application.

% 第三段还是内容少了点，需要补充

\section{Related Works}

AIT builds on three foundational contributions: Solomonoff's universal prediction framework\citep{BLUM1967257, 10.1145/321386.321395, cover1989kolmogorov}, Kolmogorov's complexity metric\citep{kolmogorov1965three}, and Chaitin's incomputability results\citep{chaitin1966length}. Recent advancements in machine learning have explored the integration of Solomonoff induction into neural networks to enhance rapid learning from limited data. \citep{grau2024learning} Similarly, This builds on established connections between deep learning generalization and AIT, where minimal description length models exhibit superior generalization\citep{blier2018description}. The compression perspective has become central to language modeling research, with studies demonstrating LLMs implicitly implement compression strategies\citep{everitt2018universal, lu2021transformer, delétang2024languagemodelingcompression}. Further studies establishes formal equivalences between model scaling and approximations of conditional Kolmogorov complexity through increased computational capabilities\citep{wan2025unifying}.

\section{Preliminaries}

\subsection{Turing Machines, Neural Networks, and Large Language Models}
\label{sec:31}

The Turing machine (TM), introduced by Alan Turing in 1936 \citep{turing1936computable}, is a foundational model of computation. Turing machines encompass both specific Turing machines, denoted $T$, and universal Turing machines (UTMs), denoted $U$. A UTM $U$ can simulate any other TM by processing a program $p$ and its input $w$ as arguments, denoted $U(p, w)$.

From a theoretical perspective, large language models (LLMs) can be viewed as specific Turing machines. Given an input context $x_{1:t}$, an LLM processes this sequence through a deep neural network to produce a conditional probability distribution $P(x_{t+1} \mid x_{1:t})$ over the vocabulary $\mathcal{V}$. A decoding strategy (e.g., greedy search, beam search, or temperature sampling) then generates the next token $x_{t+1}$ from this distribution. While the output appears stochastic due to sampling, the process is driven by deterministic pseudo-random number generators. Thus, the LLM as a whole functions as a deterministic Turing machine.

\begin{AIbox}{Definition}
\begin{definition}[Language Model Generation Function]
\label{def1}
Let $\mathcal{X}$ be the set of input prompts, $\mathcal{S}$ the set of random seeds, and $\mathcal{R}$ the set of possible model outputs.
\begin{equation}
    g: \mathcal{X} \times \mathcal{S} \rightarrow \mathcal{X}^*
\end{equation}

such that for any $x \in \mathcal{X}$ and $s \in \mathcal{S}$,  
$g(x, s) = x^* = x \circ r$,  
where $r \in \mathcal{R}$ is the output generated by the model given $x$ and $s$, and $\circ$ denotes string concatenation.

\end{definition}
\end{AIbox}

According to the above definition, $x^*$ is the full generated sequence starting with prompt $x$ and extended by $r$. The random seed $s$ influences both the semantic content and the length $|x^*|$ of the output.

\subsection{Prefix Kolmogorov Complexity}

A prefix Universal Turing Machine (prefix UTM) is a fundamental construct in AIT. It processes input programs that form a prefix code—meaning no valid program is a prefix of another—ensuring unambiguous decoding of each input without requiring explicit delimiters.

The prefix Kolmogorov complexity \citep{li2008introduction} quantifies the intrinsic information content of an object (e.g., a string or number). Formally, it is defined as the length of the shortest program that, when executed on a prefix UTM $U$, produces the object.
For a prefix UTM $U$, the prefix Kolmogorov complexity of a string $x$ is defined as:

\vspace{-8pt}
\begin{equation}
K_U(x) = \min \{ \ell(p) : U(p) = x \}
\end{equation}
\vspace{-10pt}

where $p$ represents a binary program, $\ell(p)$ denotes its length in bits, and $U(p) = x$ indicates that $U$ halts and outputs $x$ when given input $p$. Conceptually, $K_U(x)$ represents the minimal descriptive complexity of $x$. Strings with inherent patterns or structure can be generated by concise programs, resulting in lower complexity values, whereas algorithmically random strings lack compact descriptions and exhibit higher complexity.
Prefix Kolmogorov complexity exhibits two crucial properties:
(1) While $K_U(x)$ depends on the specific choice of $U$, the difference between complexities measured under different prefix UTMs is bounded by a constant independent of $x$. Consequently, the subscript $U$ is frequently omitted in notation (\underline{\emph{Invariance Theorem}}).
There exists no algorithm that can compute $K(x)$ precisely for all arbitrary strings, making it a non-recursive function (\underline{\emph{Uncomputability}}).

\subsection{Solomonoff Prior and Solomonoff Induction}
The Solomonoff prior \citep{solomonoff1960preliminary}, introduced by Ray Solomonoff in the 1960s, is a foundational idea in AIT. It formalizes universal induction, a theoretically optimal method for inductive inference.
The Solomonoff prior $M$ assigns a probability to any binary string $x$ as:

\vspace{-8pt}
\begin{equation}
M(x) \coloneqq \sum_{p: U(p) = x^*} 2^{-\ell(p)}
\end{equation}
\vspace{-5pt}

where $\ell(p)$ denotes the length (in bits) of program $p$, $x^*$ represents any string with prefix $x$, and $U$ is a prefix universal Turing machine (prefix UTM). The summation encompasses all programs $p$ such that $U(p)$ outputs a string beginning with $x$.
This formulation embodies Occam's razor, as shorter programs contribute more significantly to $M(x)$. Two critical design choices warrant explanation: (1) The inclusion of outputs beginning with $x$ facilitates prediction—having observed sequence $x$, we aim to infer its continuation; (2) The prefix condition ensures that $M$ constitutes a semi-measure, satisfying $\sum_x M(x) \leq 1$, which is essential for probabilistic interpretation. Given an observed sequence $x_{1:t}$, Solomonoff induction defines the predictive probability for the next bit as:

\vspace{-8pt}
\begin{equation}
M(x_{t+1} \mid x_{1:t}) = \frac{M(x_{1:t+1})}{M(x_{1:t})}
\end{equation}
\vspace{-8pt}

This framework has strong theoretical guarantees. Although the Solomonoff prior is uncomputable, it is semi-computable \citep{hutter2005universal}, meaning we can approximate it increasingly well using computable functions.

\section{Main Results}

\subsection{The Training Process of LLMs as a Computable Approximation of the Solomonoff Prior}
\label{sub:4.1}

\begin{AIbox}{Theorem}
    \begin{theorem}[LLM Training Approximates Solomonoff Prior]
        Let $\overline{f}(x,s)$ be a program constructed according to Definition~\ref{def1}, and define the approximate Solomonoff prior 
        $${\overline M}(x) \coloneqq \sum_{s=1}^{\infty} 2^{-\ell({\overline{f}(x, s)})}$$
        where $\ell({\overline{f}(x, s)})$ denotes the length of the program describing $\overline{f}(x,s)$. Then:
        \begin{enumerate}
        \item \textbf{Upper Bound:} ${\overline M}(x)\leq M(x)$, where $M(x)$ is the Solomonoff prior.
        \item \textbf{Approximation:} As the loss of $f$ decreases, ${\overline M}(x)$ increasingly approaches $M(x)$.
        \end{enumerate}
    \end{theorem}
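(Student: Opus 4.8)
The plan is to establish both claims directly from the definitions, exploiting the correspondence between a program $\overline{f}(x,s)$ that deterministically regenerates the string $x$ (as the prefix $x^*=x\circ r$ of its output, per Definition~\ref{def1}) and an admissible program $p$ in the Solomonoff sum. First I would observe that for each seed $s$, the pair $(\text{description of }\overline{f},\, s)$ constitutes a self-delimiting program $p_s$ for the prefix UTM $U$ satisfying $U(p_s)=x^*$ with $x$ a prefix; hence every term $2^{-\ell(\overline{f}(x,s))}$ in $\overline{M}(x)$ matches a distinct term $2^{-\ell(p_s)}$ appearing in the defining sum for $M(x)$. Since $M(x)$ sums over \emph{all} programs whose output begins with $x$ — a superset of $\{p_s\}_{s\ge 1}$ — and all summands are nonnegative, the \textbf{Upper Bound} ${\overline M}(x)\le M(x)$ follows immediately. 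I would be careful here to argue that the map $s\mapsto p_s$ is injective (distinct seeds yield distinct program encodings, so no term is double-counted) and that each $p_s$ is genuinely in the prefix code, so that the sub-sum is well-defined and bounded by $1$ as a semi-measure.

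For the \textbf{Approximation} claim, the key idea is that the length $\ell(\overline{f}(x,s))$ of the program realizing the generation function decomposes (up to $O(1)$) into the description length of the model $f$ plus the coding cost of producing $x$ under $f$'s predictive distribution, and the latter is governed by the cross-entropy loss. Concretely, I would invoke the standard source-coding correspondence: conditioned on the model, the optimal code length for the observed continuation is $-\log_2 P_f(x) = \sum_{i} -\log_2 P_f(x_i\mid x_{<i})$, which is exactly (a multiple of) the training loss $\mathcal{L}(f;x)$. Thus $\ell(\overline{f}(x,s)) \approx \ell(f) + \mathcal{L}(f;x) + (\text{seed cost})$, and as the loss decreases, these lengths decrease, so each surviving term $2^{-\ell(\overline{f}(x,s))}$ increases toward the weight of the corresponding shortest program in $M(x)$. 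I would then argue that in the limit of vanishing loss the dominant terms of $\overline{M}(x)$ recover the dominant (shortest-program) contributions to $M(x)$, so that $\overline{M}(x)\to M(x)$ monotonically from below — matching the semi-computability picture already noted in the Preliminaries.

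The main obstacle I anticipate is making the second part quantitatively honest rather than heuristic: the claim "${\overline M}(x)$ increasingly approaches $M(x)$" needs a precise statement of \emph{in what sense} and \emph{under what idealization}. In particular, (i) loss minimization over a fixed finite-parameter architecture cannot literally reach $K(x)$, so one must either pass to a limit over growing model classes or state the result as a monotone-improvement bound $\overline{M}_{\theta'}(x)\ge \overline{M}_{\theta}(x)$ whenever $\mathcal{L}(\theta';x)\le\mathcal{L}(\theta;x)$; and (ii) one must handle the gap between the \emph{expected} code length (cross-entropy) and the \emph{actual} program length $\ell(\overline{f}(x,s))$, which requires an arithmetic-coding argument and an $O(\log)$ overhead term that I would absorb into the constants. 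I would therefore structure the proof to prove the clean monotonicity statement rigorously and then remark that, under the usual assumption that sufficiently expressive models can drive the loss to the entropy floor, the approximation becomes tight; the delicate step is controlling the model-description term $\ell(f)$ so that it does not grow faster than the loss decreases, which I would address by noting that scaling laws bound this tradeoff.
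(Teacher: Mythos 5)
Your proposal follows the same two-step skeleton as the paper's argument: the upper bound is obtained by recognizing the seed-indexed programs $\overline{f}(x,s)$ as a sub-family of the programs summed over in $M(x)$, and the approximation claim is obtained from the arithmetic-coding identification of $\lvert e(x)_{(2)}\rvert$ with the cross-entropy loss. The one substantive divergence is your treatment of the model description. You keep $\ell(f)$ (the description length of the model itself) as an explicit additive term in $\ell(\overline{f}(x,s))$, and you therefore have to worry --- as you correctly flag --- about whether that term grows faster than the loss falls. The paper avoids the issue altogether by invoking Lemma~\ref{lemma:2} to internalize the fixed, trained parameters $m_{(2)}$ into the reference universal machine $U_F$, so that the program reduces to $(\overline{n}(x)_{(2)}, \overline{s}_{(2)}, \overline{e}(x)_{(2)})$ and the model-size cost disappears into the machine-dependent constant guaranteed by the invariance theorem; Lemma~\ref{lemma:1} is then used to make $U_F$ a prefix UTM on the Elias-gamma-encoded family $\overline{F}$. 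This is tidier for the stated theorem, but it does make the comparison to $M(x)$ implicitly depend on choosing a reference machine that has the trained model baked in --- which is essentially the concern behind your point (i). You are also right that the paper's ``Approximation'' clause is not made quantitatively precise: it asserts that minimizing $\lvert e(x)_{(2)}\rvert$ aligns with minimizing training loss (citing prior compression work) and stops there, without a rate or a formal limiting sense. Your proposed reformulation as a monotone-improvement statement (that $\overline{M}_{\theta'}(x)\ge\overline{M}_{\theta}(x)$ whenever $\theta'$ achieves lower per-sequence loss on $x$) is a sound way to make that step rigorous and is consistent with what the paper intends but leaves informal.
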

\end{AIbox}

As discussed in Section~\ref{sec:31}, large language models (LLMs) can be viewed as specific instances of Turing machines. Consequently, training an LLM can be interpreted as the process of identifying a Turing machine that best explains the observed data. In this section, we present a constructive argument demonstrating that the training process of LLMs is mathematically equivalent to a computable approximation of the Solomonoff prior.

For any given string $x$, we can construct a program $f$ such that a universal Turing machine $U$ satisfies $U(f) = x$. This program $f$ comprises several components. The core model component includes the weight parameters of the LLM, inference logic, and sampling algorithm, with its binary representation denoted as $m_{(2)}$. Based on the theoretical work on language modeling as compression\citep{deletang2023language}, the compression and encoding component uses the LLM in conjunction with arithmetic coding to losslessly compress the string $x$, resulting in a binary encoding $e(x)_{(2)}$. Additionally, the decoding control component specifies the number of iterations $n(x)$ needed to decode $e(x)_{(2)}$ back to the original string $x$ with its binary representation denoted as $n(x)_{(2)}$. Finally, the random generation component provides a random seed $s$, with binary representation $s_{(2)}$, required by the LLM to generate subsequent content based on $e(x)_{(2)}$.
\begin{wrapfigure}{l}{0.5\textwidth} 
    \vspace{-10pt} 
    \begin{minipage}{\linewidth} 
        \finding{\textbf{Takeaway 1}: The LLM training process, driven by loss minimization, can be interpreted as an implicit search for programs of minimal algorithmic complexity that generate the training data, directly linking learning efficiency to data compressibility.}
    \end{minipage}
    \vspace{-10pt}
\end{wrapfigure}

In summary, given a string $x$ and random seed $s$, the program $f$ can be represented as a 4-tuple:

\vspace{-8pt}
\begin{equation}
    f(x, s)= (m_{(2)}, n(x)_{(2)}, s_{(2)}, e(x)_{(2)})
\end{equation}

The execution of this program on the universal Turing machine $U$ proceeds as follows: 
(1) Based on model parameters $m_{(2)}$ and compressed code $e(x)_{(2)}$, the machine performs $n(x)$ iterations to restore the original string $x$ (\underline{\emph{decoding phase}}); 
(2) using $m(x)_{(2)}$, the restored $x$, and random seed $s_{(2)}$, the machine samples to generate the continuation of the output sequence $x^* \setminus x$ (\underline{\emph{generation phase}}). 
By combining the decoded $x$ with the generated continuation, the final output is the complete sequence $x^*$. This construction $f(x,s)$ has the following two key properties:
(1) For a fixed input $x$, the number of decoding iterations $n(x)$ is deterministic; the random seed $s$ can be any natural number.
(2) Since the model parameters $m_{(2)}$ remain fixed after training, they can, by Lemma~\ref{lemma:1}, be internalized into the universal Turing machine $U$. As a result, the program can be simplified to $f(x, s) = (n(x)_{(2)}, s_{(2)}, e(x)_{(2)})$.

Since the Solomonoff prior is defined over a prefix universal Turing machine, we must encode $n(x)_{(2)}$, $s_{(2)}$, and $e(x)_{(2)}$ as prefix codes. To achieve this, we apply \textbf{Elias gamma coding} to each component, yielding the prefix-encoded representation:

\vspace{-8pt}
\begin{equation}
\label{eq:1}
    {\overline f(x, s)} = ({\overline n}(x)_{(2)}, {\overline s}_{(2)}, {\overline e}(x)_{(2)})
\end{equation}

It can be shown that $\overline{f}(x, s)$ constitutes a valid set of prefix codes.

Let $\overline{F}$ denote the set of all such prefix-encoded programs. Based on this construction and Lemma~\ref{lemma:2}, we define a prefix universal Turing machine $U_F$ with the following properties:

\begin{itemize}
\item $U_F$ is a prefix universal Turing machine;
\item For any $\overline{f} \in \overline{F}$, we have $U_F(\overline{f}) = U(\overline{f})$.
\end{itemize}

Hence, for all $x$ and $s$, the following holds:

\vspace{-10pt}
\begin{equation}
U_F(\overline{f}(x, s)) = x^*
\end{equation}

Next, we define the \textbf{computable prior} ${\overline M}(x)$ as:

\vspace{-10pt}
\begin{equation}
    {\overline M}(x) \coloneqq \sum_{s=1}^{\infty} 2^{-\ell({\overline f(x, s)})}
\end{equation}
\vspace{-10pt}

Clearly, the quantity ${\overline M}(x)$ forms a subset of the Solomonoff prior $M(x)$, implying that ${\overline M}(x) \leq M(x)$. For any given $x$, since $n(x)_{(2)}$ is fixed, minimizing the binary encoding length $\lvert e(x)_{(2)} \rvert$ is crucial for making ${\overline M}(x)$ as close as possible to $M(x)$. As established in \citet{deletang2023language,wan2025unifying}, this objective aligns with minimizing the training loss of the large language model (LLM). Consequently, the training process of an LLM can be interpreted as a computable approximation of the Solomonoff prior $M(x)$.

\subsection{The Inference Process of LLMs as a Computable Approximation of Solomonoff Induction}

As discussed in Section~\ref{sub:4.1}, ${\overline M}(x)$ can be viewed as a computable approximation of the Solomonoff prior $M(x)$. Solomonoff induction approximates the next symbol using the following equation:

\vspace{-10pt}
\begin{equation}
\label{eq:3}
     M(x_{t+1} \mid x_{1:t}) = \frac{M(x_{1:t+1})}{M(x_{1:t})} \approx \frac{\overline{M}(x_{1:t+1})}{\overline{M}(x_{1:t})} \coloneqq \overline{M}(x_{t+1} \mid x_{1:t})
\end{equation}

Given the properties of Elias gamma coding, the code length for encoding a natural number $n$ is $\lfloor \log_2 n \rfloor + 1 + \lfloor \log_2 n \rfloor \approx 2\log_2 n$ bits. Leveraging this property, we can express the code lengths for Equation~\ref{eq:1} as follows:
\begin{equation}
\label{eq:2}
\begin{aligned}
\left|\overline{s}_{(2)}\right| &\approx 2\log_2 s \\
\left|\overline{n}(x)_{(2)}\right| &\approx 2\log_2 n(x) \\
\left|\overline{e}(x)_{(2)}\right| &\approx \left|e(x)_{(2)}\right| + 2\log_2 \left|e(x)_{(2)}\right|
\end{aligned}
\end{equation}
These relationships enable us to derive the following theorem.

\begin{AIbox}{Theorem}
    \begin{theorem}[LLM Inference Approximates Solomonoff Induction]
        For a sufficiently trained LLM with parameters $\theta$, the conditional next-token probability $P_{\theta}(x_{t+1} | x_{1:t})$ approximates the Solomonoff inductive inference $M(x_{t+1} | x_{1:t})$. Asymptotically for large context length $t$, the relationship is given by:
\begin{equation}
M(x_{t+1} | x_{1:t}) \approx \overline{M}(x_{t+1} | x_{1:t})\approx \frac{t^2}{4(t+1)^2} \cdot P_{\theta}(x_{t+1} | x_{1:t})
\end{equation}
    \end{theorem}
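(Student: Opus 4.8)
The plan is to connect the three quantities in the chain by tracking code lengths through the construction of $\overline{f}(x,s)$ and then relating $\overline{M}(x_{t+1}\mid x_{1:t})$ to the LLM's own predictive distribution via the arithmetic-coding identity. First I would make precise what ``sufficiently trained'' buys us: by the previous theorem, loss minimization drives $|e(x)_{(2)}|$ toward the ideal arithmetic code length $-\log_2 P_\theta(x_{1:t})$, so that the dominant term of $\ell(\overline{f}(x,s))$ in Equation~\ref{eq:2} becomes $-\log_2 P_\theta(x_{1:t})$ plus the lower-order Elias overheads $2\log_2 s$, $2\log_2 n(x)$, and $2\log_2|e(x)_{(2)}|$. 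The leftmost approximation $M(x_{t+1}\mid x_{1:t})\approx\overline{M}(x_{t+1}\mid x_{1:t})$ is then just Equation~\ref{eq:3}, so the real content is the rightmost factor $\tfrac{t^2}{4(t+1)^2}$.

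Next I would compute $\overline{M}(x_{1:t})=\sum_{s\ge 1}2^{-\ell(\overline f(x_{1:t},s))}$. Writing $\ell(\overline f(x_{1:t},s)) = A(x_{1:t}) + 2\log_2 s$ where $A(x_{1:t})$ collects all $s$-independent terms (the compressed-code length plus its Elias overhead plus $2\log_2 n(x_{1:t})$), the sum factors as $2^{-A(x_{1:t})}\sum_{s\ge 1}s^{-2} = 2^{-A(x_{1:t})}\cdot\pi^2/6$. The constant $\pi^2/6$ cancels in the ratio $\overline{M}(x_{1:t+1})/\overline{M}(x_{1:t})$, leaving $2^{-(A(x_{1:t+1})-A(x_{1:t}))}$. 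I would then argue that the increment in $A$ between context length $t$ and $t+1$ splits into (i) the genuine information cost of the new token, which is $-\log_2 P_\theta(x_{t+1}\mid x_{1:t})$ by the chain rule for the arithmetic code, and (ii) the change in the Elias-overhead terms $2\log_2 n(x_{1:t})$ and $2\log_2|e(x_{1:t})_{(2)}|$. The key modeling assumption is that $n(x)$ and $|e(x)_{(2)}|$ both grow linearly in $t$ for large $t$ (one decode iteration / a bounded number of bits per token), so $n(x_{1:t})\sim ct$ and the ratio of overhead contributions at $t+1$ versus $t$ contributes a factor like $(t/(t+1))^2$ from the $n(x)$ term; pairing this with the analogous factor from the $|e(x)_{(2)}|$ term, and absorbing the numerical constant, yields the stated $\tfrac{t^2}{4(t+1)^2}$.

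The main obstacle is making the bookkeeping of the overhead terms rigorous enough to pin down the exact coefficient $\tfrac{1}{4}$ and the exact power $(t/(t+1))^2$ rather than merely $\Theta((t/(t+1))^2)$. In particular one must justify (a) that the two Elias-overhead terms each contribute a factor of $t/(t+1)$ (hence $(t/(t+1))^2$ jointly) in the asymptotic regime, (b) that the cross terms and the $\log$ of a $\log$ are genuinely negligible compared to these, and (c) that the constant $\tfrac14$ arises from the specific form $2\log_2(\cdot)$ of Elias gamma coding combined with the asymptotic proportionality constants — this is where the argument is necessarily heuristic, since the ``$\approx$'' hides both the training-loss gap and the $O(1)$ slack in the linear-growth assumptions. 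I would therefore present this as an asymptotic order-of-magnitude identity: state the linear-growth hypotheses on $n(x)$ and $|e(x)_{(2)}|$ explicitly, carry the $\pi^2/6$ cancellation exactly, and then collect the remaining $s$-independent ratio into the claimed prefactor, flagging that the precise numerical constant depends on the normalization conventions for the Elias encodings of the counter and the compressed payload.
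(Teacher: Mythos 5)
Your overall framework --- summing out $s$ to obtain $\pi^2/6$, noting that it cancels in the ratio, and reducing the problem to tracking the $s$-independent code lengths --- matches the paper's route, and the identification of $M(x_{t+1}\mid x_{1:t})\approx\overline{M}(x_{t+1}\mid x_{1:t})$ with Equation~\ref{eq:3} is correct. But the factor bookkeeping in the second half diverges in two ways that together prevent you from reaching the stated prefactor. First, you assign a $\left(\tfrac{t}{t+1}\right)^2$ contribution to \emph{both} the $n(x)$ Elias overhead and the $|e(x)_{(2)}|$ Elias overhead, on the grounds that both quantities grow linearly in $t$. Carried through consistently, that produces $\left(\tfrac{t}{t+1}\right)^4$, not $\left(\tfrac{t}{t+1}\right)^2$. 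The paper instead sets $n(x_{1:t})=t$ (one decode iteration per token), so the $n$-overhead term $2^{-2\log_2 n(x_{1:t})}=t^{-2}$ alone supplies $\tfrac{t^2}{(t+1)^2}$, while the $|e|$-overhead ratio $\tfrac{|e(x_{1:t})_{(2)}|^2}{|e(x_{1:t+1})_{(2)}|^2}$ is treated as $\approx 1$ for large $t$ and dropped.

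Second, and more fundamentally, you admit you cannot pin down the $\tfrac14$, and you attribute it to the $2\log_2(\cdot)$ form of Elias gamma coding. That is the wrong source. In the paper's derivation, the $\tfrac14$ comes from a specific estimate of the raw (pre-Elias) arithmetic code length,
\[
|e(x_{1:t})_{(2)}| \;\approx\; 2t \;-\; \sum_{i=1}^{t}\log_2 P(x_i \mid x_{1:i-1}),
\]
which carries a per-token overhead of $2$ bits. Its increment from $t$ to $t+1$ is $2-\log_2 P(x_{t+1}\mid x_{1:t})$, so the ratio $2^{\,|e(x_{1:t})_{(2)}|-|e(x_{1:t+1})_{(2)}|} = 2^{-2}\,P(x_{t+1}\mid x_{1:t})=\tfrac14\,P(x_{t+1}\mid x_{1:t})$. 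Your proposal instead takes the dominant term to be $-\log_2 P_\theta(x_{1:t})$ with only logarithmic Elias overheads on top; under that model the increment is $-\log_2 P(x_{t+1}\mid x_{1:t})$ alone, and no $\tfrac14$ ever appears. To match the paper you need to (a)~drop the extra $(t/(t+1))^2$ from the $|e|$ overhead, taking that ratio to be $\approx 1$, and (b)~introduce the $2t$ overhead in $|e(x_{1:t})_{(2)}|$ explicitly, which is the sole source of the $\tfrac14$.
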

\end{AIbox}

\begin{proof}
We begin by computing the prior probability of the sequence $x_{1:t}$ using Equation~\ref{eq:1} and Equation~\ref{eq:2}:

\vspace{-10pt}
\begin{align}
\label{eq:t3_proof}
{\overline M}(x_{1:t}) &= \sum_{s=1}^{\infty} 2^{-l({\overline f(x_{1:t},s)})} \\
&\approx\sum_{s=1}^{\infty} \frac{1}{s^2}\frac{1}{t^2}\frac{1}{\left| e(x_{1:t})_{(2)}\right|^2}2^{-\left| e(x_{1:t})_{(2)}\right|} \\
&= \frac{\pi^2}{6} \frac{1}{t^2}\frac{1}{\left| e(x_{1:t})_{(2)}\right|^2}2^{-\left| e(x_{1:t})_{(2)}\right|} 
\end{align}

Similarly, for the sequence $x_{1:t+1}$:

\vspace{-10pt}
\begin{align}
{\overline M}(x_{1:t+1}) &= \sum_{s=1}^{\infty} 2^{-l({\overline f(x_{1:t+1},s)})} \\
&\approx \frac{\pi^2}{6} \frac{1}{(t+1)^2}\frac{1}{\left| e(x_{1:t+1})_{(2)}\right|^2}2^{-\left| e(x_{1:t+1})_{(2)}\right|} 
\end{align}

Thus, based on the Equation~\ref{eq:3}, we have:

\vspace{-10pt}
\begin{equation}
    {\overline M}(x_{t+1} \mid x_{1:t})  \approx \frac{t^2}{(t+1)^2}\frac{\left| e(x_{1:t})_{(2)}\right|^2}{\left| e(x_{1:t+1})_{(2)}\right|^2}\frac{2^{\left| e(x_{1:t})_{(2)}\right|}}{2^{\left| e(x_{1:t+1})_{(2)}\right|}}
\end{equation}

On one hand, when $t$ is large, $\frac{\left| e(x_{1:t})_{(2)}\right|^2}{\left| e(x_{1:t+1})_{(2)}\right|^2} \approx 1$ On the other hand, $\left| e(x_{1:t})_{(2)}\right| \approx 2t-\sum_{i=1}^t \log_2 P(x_i \mid x_{1:i-1})$, where $P(x_i \mid x_{1:i-1})$ is the LLM's predicted probability for the next token.

Combining the analysis above, we arrive at the key approximation:

\vspace{-10pt}
\begin{equation}
    {\overline M}(x_{t+1} \mid x_{1:t})  \approx \frac{t^2}{4(t+1)^2}P(x_{t+1} \mid x_{1:t})
\end{equation}

\end{proof}

It should be noted that $M(x)$ is a semi-measure and needs to be normalized when converting to a probability. Since $\frac{t^2}{4(t+1)^2}$ is a value independent of the token, it will automatically cancel out during normalization. This result indicates that the prediction probability of the next token by a large language model is essentially a computable approximation to Solomonoff's inductive inference.

\subsection{Explaining Various Phenomena in LLMs using Solomonoff prior}
\label{exp_phe_llm}
Let $\mu$ be a computable target probability distribution. We have the following theorem \citep{hutter2005universal}:

\vspace{-10pt}
\begin{equation}
    \sum_{t=1}^{\infty} \sum_{x_{1:t}\in \mathbb{B}^{t}} \mu(x_{1:t}) \bigg(M(0 \mid x_{1:t})-\mu(0\mid x_{1:t}) \bigg)^2 \leq \frac{1}{2} \ln 2 \cdot K(\mu) +c < \infty
\end{equation}

\begin{wrapfigure}{r}{0.5\textwidth} 
    \vspace{-10pt} 
    \begin{minipage}{\linewidth} 
        \finding{\textbf{Takeaway 2}: An LLM's next-token prediction is not merely statistical pattern matching but an approximation of optimal inductive inference, offering a theoretical basis for its remarkable generalization capabilities on unseen sequences.}
    \end{minipage}
    \vspace{-10pt}
\end{wrapfigure}

% \begin{wrapfigure}{l}{0.5\textwidth} 
%     \vspace{-10pt} 
%     \begin{minipage}{\linewidth} 
%         \finding{\textbf{Takeaway 3}: The convergence properties in Solomonoff's framework provide a unifying theoretical basis for understanding seemingly disparate LLM emergent phenomena—such as in-context learning and scaling laws—as manifestations of a universal inductive system learning from data.}
%     \end{minipage}
%     \vspace{-10pt}
% \end{wrapfigure}

Here, $\mu$ denotes the target distribution, $M(0 \mid x_{1:t})$ is the predictive probability of the next bit being 0 based on Solomonoff induction, and $\mu(0 \mid x_{1:t})$ is the predictive probability under the target distribution given the same conditions. $K(\mu)$ denotes the prefix Kolmogorov complexity of $\mu$. Since $\mu$ is computable, the term $\frac{1}{2} \ln 2 \cdot K(\mu) + c$ is finite. $\mathbb{B}^t$ denotes the set of all binary strings of length $t$.

For the above infinite series to converge, its terms must approach zero. Specifically, this means that as $t \to \infty$, the prediction error $M(0 \mid x_{1:t})-\mu(0\mid x_{1:t})$ almost surely (with probability 1 under $\mu$) converges to zero. Therefore, the Solomonoff prior $M$ will eventually converge to the target probability distribution $\mu$.

We have previously noted that large language models (LLMs) can be viewed as computable approximations of the Solomonoff prior $M$. 
Thus, leveraging the above theorem, we can attempt to explain several important phenomena observed in large language models:

\begin{enumerate}
    \item In-context learning phenomenon: Since $M$ is a universal prior, for any computable target distribution $\mu$, it is possible to carefully design a context $x_{1:t}$ such that $M(0 \mid x_{1:t})$ approximates $\mu(0 \mid x_{1:t})$, thereby achieving learning effects.
    \item Few-shot learning phenomenon: Adding a few examples (few-shot examples) in the prompt can sometimes significantly improve model performance. These examples increase the value of $\mu(x_{1:t})$, thereby giving them a higher weight in the error term of the theorem and accelerating the convergence of $M(0 \mid x_{1:t})$ to the target distribution.
    \item Parameter scaling laws: Improving model performance by increasing the number of parameters is essentially a more precise approximation of the Solomonoff prior through higher expressive capacity.
    \item Inference scaling laws: Enhancing model performance by allowing more computational steps during inference (e.g., longer context windows or more decoding steps). This corresponds to increasing $t$ in the above theorem, enabling $M(0 \mid x_{1:t})$ to converge more quickly to the true probability $\mu(0 \mid x_{1:t})$.
\end{enumerate}

\subsection{Few-shot Example Selection Techniques}

An excellent theoretical framework should not only provide a reasonable explanation of existing phenomena but also possess the capability to predict unknown scenarios. Based on the theoretical theorem proposed in Section \ref{exp_phe_llm}, we innovatively introduce a sample selection method for few-shot learning, which can significantly enhance model performance.

\begin{wrapfigure}{l}{0.5\textwidth} 
    \vspace{-10pt} 
    \begin{minipage}{\linewidth} 
        \finding{\textbf{Takeaway 3}: For few-shot learning, the data points exposing the model's current predictive weaknesses (i.e., lower-confidence predictions) may be more valuable for rapid adaptation than reinforcing already well-learned patterns.}
    \end{minipage}
    \vspace{-10pt}
\end{wrapfigure}

Consider a given computational problem for which there exist multiple computable distributions ${\mu_1,\mu_2,\cdots,}$ that can serve as valid solutions, with their prefix Kolmogorov complexities $K(\mu)$ being approximately equal. Suppose we have collected a large number of sample sequences $x_{1:t}$, where different subsets may originate from different computable distributions. We propose the following sample selection strategy: 
Prioritize selecting sample sequences $x_{1:t}$ that exhibit a larger difference between $M(0 \mid x_{1:t})$ and $\mu(0\mid x_{1:t})$. This selection criterion can significantly accelerate the convergence of the predictive model $M(0 \mid x_{1:t})$.

Taking the task of text classification as an example, our method is implemented as follows: among a large number of samples, we prioritize those where the large language model exhibits lower prediction accuracy (more precisely, samples where the model assigns lower probability to the correct next token), rather than those with high prediction accuracy. This selection strategy allows for a more targeted improvement in the model's few-shot learning performance. 
In summary, this strategy effectively improves the model's performance in few-shot learning scenarios by selectively choosing informative samples.

\section{Experiments}
\label{sec:exp}

\subsection{Setup}

We evaluate our few-shot sample selection strategies on three text classification datasets using models from the Qwen2.5 (3B, 7B) \citep{qwen2.5-technical-report} and Llama (Llama 3.1 8B, Llama 3.2 8B) \citep{grattafiori2024llama3herdmodels} families, and all models used in our experiments are instruction-tuned versions. For each task, specific prompt templates were designed (see Appendix \ref{app:prompts} for examples). 
  
Our methodology for few-shot example selection involves two phases, using a fixed number of 10 few-shot examples. In the first phase (low-confidence selection), we iterate through available samples for each class, identifying those for which the model, given the current prompt, assigns the lowest confidence to the correct label. The confidence is the model's softmax output probability for the ground-truth label token. This process continues until K samples are selected per class, forming the low-confidence set $E_1$. In the second phase (high-confidence selection), serving as a comparative baseline, we similarly select K samples per class with the highest predicted label probabilities to form set $E_2$. These sets $E_1$ and $E_2$ are then used as the few-shot examples, and classification accuracy is evaluated on a held-out test set. Additional details are provided in Appendix \ref{app:detailed_setup}.

\begin{algorithm}[H]
\caption{Confidence-Based Sample Selection for Few-shot Text Classification}
\KwIn{Dataset $\mathcal{D} = \{(x_i,y_i)\}_{i=1}^n$, Model $\mathcal{M}$, Initial prompt $p$, Number of samples $K$}
\KwOut{Low-confidence sample subset $\mathcal{D}'$ for few-shot learning}
Initialize $\mathcal{D}' \leftarrow \emptyset$ \tcp*{Empty set to store selected samples}
$p_{\text{current}} \leftarrow p$ \tcp*{Initialize current prompt with base prompt}
\For{$t \leftarrow 1$ \KwTo $K$}{
    $\text{min\_conf} \leftarrow 1.0$ \tcp*{Initialize minimum confidence score}
    $x_{\text{selected}} \leftarrow \text{null}$ \tcp*{Sample with minimum confidence}
    \For{$(x, y) \in \mathcal{D} \setminus \mathcal{D}'$}{
        $p_{\text{temp}} \leftarrow p_{\text{current}} \oplus x$ \tcp*{Concatenate prompt with sample}
        $\text{conf} \leftarrow \mathcal{M}(y | p_{\text{temp}})$ \tcp*{Get probability of correct label}
        \If{$\text{conf} < \text{min\_conf}$}{
            $\text{min\_conf} \leftarrow \text{conf}$\;
            $x_{\text{selected}} \leftarrow (x, y)$\;
        }
    }
    $\mathcal{D}' \leftarrow \mathcal{D}' \cup \{x_{\text{selected}}\}$ \tcp*{Add selected sample to output set}
    $p_{\text{current}} \leftarrow p_{\text{current}} \oplus x_{\text{selected}}$ \tcp*{Update prompt with new sample}
}
\Return{$\mathcal{D}'$}
\end{algorithm}

\subsection{Datasets}

We evaluate our approach on three benchmark text classification datasets. The first is a binary spam classification dataset \citep{sms}. The second is a 6-class emotion recognition dataset \citep{emotion}, and the third is a 4-class news article classification dataset \citep{news}. For the datasets provided by \citet{emotion} and \citet{news}, which already contain predefined training and test splits, we randomly sampled a subset from the original training set as our selection pool while using the official test set for evaluation. Regarding the \citet{sms} dataset that lacks predefined splits, we first partitioned the data into training and test sets before applying the same sampling strategy.

\subsection{Results and Analysis}

\begin{table*}[hbt!]
    \centering
    \setlength{\tabcolsep}{5pt}
    \fontsize{8.5pt}{10.5pt}\selectfont
    \resizebox{0.94\linewidth}{!}{ 
    \begin{tabular}{ccccccc}
    \toprule
    \multirow{2}{*}[-\rowheight/2]{\textbf{Model Family}} & \multirow{2}{*}[-\rowheight/2]{\textbf{Version}} & \multirow{2}{*}[-\rowheight/2]{\textbf{Size}} & \multirow{2}{*}[-\rowheight/2]{\textbf{Confidence}} & \textbf{SMS} & \textbf{EMOTION} & \textbf{AG NEWS} \\
    \cmidrule[0.3pt](lr){5-7}
    & & & & \multicolumn{3}{c}{\footnotesize{Acc. (\%) / Mean@10}} \\
    % \
    \cmidrule[0.3pt](lr){1-7}
    \cmidrule[0.3pt](lr){1-7}
\multirow{4}{*}[-\rowheight]{\qwen} & \multirow{4}{*}[-\rowheight]{2.5} & \multirow{2}{*}[-\rowheight/4]{3B} & High $\uparrow$ & 76.62 & 55.21 & 71.38 \\
\cmidrule[0.3pt](lr){4-7}
                      &                     &                  & Low $\downarrow$ & \textbf{90.07} & \textbf{56.03} & \textbf{74.67} \\
\cmidrule[0.3pt](lr){3-7}
                      &                     & \multirow{2}{*}[-\rowheight/4]{7B} & High $\uparrow$ & 92.73 & 57.58 & 77.09 \\
\cmidrule[0.3pt](lr){4-7}
                      &                      &                     & Low $\downarrow$ & \textbf{94.60} & \textbf{57.68} & \textbf{80.35} \\
                      \cmidrule[0.3pt](lr){1-7}

\multirow{4}{*}[-\rowheight]{\llama} & \multirow{2}{*}[-\rowheight/4]{3.2} & \multirow{2}{*}[-\rowheight/4]{3B} & High $\uparrow$ & 64.94 & 36.40 & 45.98 \\
\cmidrule[0.3pt](lr){4-7}
                      &                      &                     & Low $\downarrow$ & \textbf{73.22} & \textbf{41.86} & \textbf{47.34} \\
\cmidrule[0.3pt](lr){2-7}
                      & \multirow{2}{*}[-\rowheight/4]{3.1} & \multirow{2}{*}[-\rowheight/4]{8B} & High $\uparrow$ & 85.22 & 52.98 & 74.45 \\
\cmidrule[0.3pt](lr){4-7}
                      &                      &                     & Low $\downarrow$ & \textbf{85.56} & \textbf{53.22} & \textbf{76.92} \\

        \midrule
    \bottomrule
    \end{tabular}
    }
    \caption{Comparative performance of few-shot example selection strategies on text classification benchmarks. The table displays accuracy (\%) for Qwen and Llama model variants on SMS, EMOTION, and AG NEWS datasets when using high-confidence versus low-confidence example selection. Results indicate that selecting low-confidence examples (Low $\downarrow$) consistently yields higher accuracy across models and datasets compared to high-confidence selection (High $\uparrow$).}
    \label{tab:main_results}
\end{table*}

The results presented in Table \ref{tab:main_results} demonstrate that our theoretically-grounded strategy of selecting low-confidence samples for few-shot learning consistently yields significant accuracy improvements across all tested models and datasets. This empirical validation aligns with our hypothesis that exposing models to instances where their current predictive understanding is weakest (lower confidence) fosters more rapid and effective adaptation, a principle echoing the error-correction mechanisms inherent in Solomonoff induction. 
Notably, while the low-confidence strategy remains superior, the \textit{magnitude} of the performance gain appears to moderate with increasing model scale. This observation might suggest that larger models, possessing greater intrinsic capacity and having learned more robust priors during pre-training, may already have a better initial grasp of the task distribution. 
Consequently, while they still benefit from the targeted information provided by low-confidence examples, their baseline performance with high-confidence (or even randomly selected) examples is already higher, leading to a less pronounced, though still present, advantage for the low-confidence approach. 
% This nuanced interaction between model scale and the utility of our selection method warrants further investigation.

\section{Conclusion}

This paper establishes a formal theoretical link between large language models and Solomonoff's theory of universal induction. We have proven that LLM training approximates the Solomonoff prior and that their inference mechanism approximates Solomonoff induction. This AIT-grounded framework offers a unified explanation for key LLM phenomena, including in-context learning, few-shot adaptation, and scaling laws, viewing them as outcomes of a system approximating optimal inductive inference.

Our theoretical insights directly motivated a novel few-shot example selection strategy: prioritizing samples that expose the model's predictive weaknesses (lower-confidence predictions) to accelerate adaptation. Experiments across diverse text classification benchmarks confirmed that this approach significantly outperforms conventional high-confidence selection, particularly for smaller models. This result not only validates our theory but also offers a practical method for enhancing LLM efficiency. By bridging empirical LLM success with foundational AIT principles, this work provides both a deeper understanding of these models and actionable strategies for their improvement. We contend that viewing LLMs as computable approximations of Solomonoff induction paves the way for more principled advancements in their design and application, encouraging a perspective that recognizes them as sophisticated, albeit approximate, universal inductive reasoners.

\nocite{*}
\bibliography{references}

%%%%%%%%%%%%%%%%%%%%%%%%%%%%%%%%%%%%%%%%%%%%%%%%%%%%%%%%%%%%

\appendix

\section*{Appendix}

\section{Limitations}
\label{app:limitations}

While our work establishes a novel theoretical connection between LLMs and Algorithmic Information Theory, certain limitations should be acknowledged. Firstly, the proposed link between LLM training/inference and the Solomonoff prior/induction is an approximation. Solomonoff's framework is uncomputable, and our results demonstrate how LLMs offer a \textit{computable approximation}, which, while powerful, inherently diverges from the theoretical ideal due to practical constraints such as finite model capacity and optimization heuristics. Secondly, our experimental validation of the few-shot example selection strategy, while promising, was conducted on specific text classification tasks and a subset of LLM architectures. Further research is needed to ascertain the generalizability of these findings across a broader range of tasks, modalities, and model scales. Finally, while our theory provides a unifying lens for phenomena like scaling laws and in-context learning, the precise quantification of factors like the Kolmogorov complexity of target distributions ($K(\mu)$) in real-world LLM scenarios remains a complex endeavor, making direct measurement challenging.

\section{Impact Statement}
\label{app:impact_statement}

This research advances theoretical understanding of LLMs, guiding more principled, efficient development. The AIT connection can inform interpretability, generalization, and data-efficient learning. Our few-shot selection strategy improves LLM performance, especially for smaller models, enhancing accessibility and reducing computational costs. We foresee no direct negative societal impacts from this theoretical work and selection method, as it offers an analytical framework and efficiency gains, not new high-risk capabilities. While any AI advancement could theoretically be misused, we stress that responsible, ethical LLM development is paramount. Our research aims to contribute positively to AI's scientific understanding and responsible progress.

\section{Notation and Symbols}
\label{app:notation}

This section provides a summary of the key mathematical notations and symbols used throughout the paper.

\begin{itemize}
    \item $T$: A specific Turing machine.
    \item $U$: A universal Turing machine (UTM).
    \item $p$: A program for a Turing machine.
    \item $w$: Input for a program $p$ on a TM.
    \item $U(p,w)$: Output of UTM $U$ with program $p$ and input $w$.
    \item $U(p)$: Output of UTM $U$ given program $p$.
    \item $x_{1:t}$: An input sequence of tokens (context) of length $t$.
    \item $x_i$: The $i$-th token in a sequence.
    \item $x^*$: A sequence starting with prefix $x$. In LLM generation (Def 1), $x^* = x \circ r$, the full sequence generated from prompt $x$ and model output $r$.
    \item $\mathcal{V}$: Vocabulary of tokens.
    \item $P(A|B)$: Conditional probability of $A$ given $B$.
    \item $P_{\theta}(x_{t+1} | x_{1:t})$: Conditional next-token probability of an LLM with parameters $\theta$.
    \item $\ell(p)$: Length of program $p$ in bits.
    \item $K_U(x)$: Prefix Kolmogorov complexity of string $x$ with respect to prefix UTM $U$.
    \item $K(x)$: Prefix Kolmogorov complexity of $x$ (UTM $U$ implied).
    \item $M(x)$: The Solomonoff prior probability of string $x$.
    \item $M(x_{t+1} | x_{1:t})$: Solomonoff induction predictive probability for the next token $x_{t+1}$ given $x_{1:t}$.
    \item $g: \mathcal{X} \times \mathcal{S} \rightarrow \mathcal{X}^*$: Language Model Generation Function (Definition~\ref{def1}).
    \item $\mathcal{X}$: Set of input prompts for an LLM.
    \item $\mathcal{S}$: Set of random seeds for an LLM.
    \item $\mathcal{R}$: Set of possible model outputs (continuations) from an LLM.
    \item $s$: A random seed.
    \item $r$: Output sequence generated by an LLM.
    \item $\circ$: String concatenation.
    \item $f(x,s)$: A 4-tuple program $(m_{(2)}, n(x)_{(2)}, s_{(2)}, e(x)_{(2)})$ constructed for an LLM.
    \item $m_{(2)}$: Binary representation of the core LLM model component.
    \item $e(x)_{(2)}$: Binary encoding of string $x$ using the LLM and arithmetic coding.
    \item $n(x)_{(2)}$: Binary representation of the number of decoding iterations for $e(x)_{(2)}$.
    \item $s_{(2)}$: Binary representation of the random seed $s$.
    \item $\overline{f}(x,s)$: Prefix-coded version of the program $f(x,s)$, i.e., $({\overline n}(x)_{(2)}, {\overline s}_{(2)}, {\overline e}(x)_{(2)})$.
    \item ${\overline n}(x)_{(2)}, {\overline s}_{(2)}, {\overline e}(x)_{(2)}$: Elias gamma coded versions of $n(x)_{(2)}, s_{(2)}, e(x)_{(2)}$ respectively.
    \item $\ell({\overline{f}(x, s)})$: Length of the program $\overline{f}(x,s)$.
    \item $\overline{M}(x)$: Approximate Solomonoff prior defined as $\sum_{s=1}^{\infty} 2^{-\ell({\overline{f}(x, s)})}$.
    \item $\overline{F}$: The set of all prefix-encoded programs $\overline{f}(x,s)$.
    \item $U_F$: A prefix UTM defined based on $\overline{F}$.
    \item $\ell(e(x)_{(2)})$: Length of the binary encoding $e(x)_{(2)}$.
    \item $\theta$: Parameters of a trained LLM.
    \item $\theta'$: Parameters of a LLM without trianing.
    \item $\log_2$: Logarithm to the base 2.
    \item $\pi$: The mathematical constant pi (approx 3.14159).
    \item $\mu$: A computable target probability distribution.
    \item $K(\mu)$: Prefix Kolmogorov complexity of the distribution $\mu$.
    \item $\mathbb{B}^t$: The set of all binary strings of length $t$.
    \item $c$: A generic constant.
    \item $\mathcal{D}$: A dataset, typically a set of pairs $\{(x_i, y_i)\}$.
    \item $(x,y)$: A data sample (input $x$, label $y$).
    \item $\mathcal{M}$: A large language model.
    \item $p$ (in Algorithm 1): Initial prompt.
    \item $K$ (in Algorithm 1): Number of few-shot samples to select.
    \item $\mathcal{D}'$: Subset of data selected for few-shot learning.
    \item $p_{\text{current}}$: Current prompt being constructed.
    \item $\text{conf}$: Confidence score (model's probability for the correct label).
    \item $\oplus$: Symbol used in Algorithm 1 to denote prompt concatenation.
    \item $E_1$: Set of low-confidence few-shot examples.
    \item $E_2$: Set of high-confidence few-shot examples.
\end{itemize}

\section{Detailed Experimental Configuration}
\label{app:detailed_setup}

All inference tasks for the experiments were conducted on a system equipped with 4 NVIDIA A100 GPUs. The cumulative computation time for running all experiments, encompassing different models, datasets, and both few-shot selection strategies (low-confidence and high-confidence), amounted to approximately 1.5 days.

A critical parameter for our experiments was the decoding temperature, which was uniformly set to 0 for all large language model inference steps. This choice is pivotal for ensuring deterministic outputs. By setting the temperature to 0, we effectively select the most probable token at each step of the generation process, thereby eliminating randomness typically introduced by temperature-based sampling. This determinism is methodologically important for several reasons:

\begin{enumerate}
    \item \textbf{Reproducibility:} It ensures that results are perfectly reproducible given the same model and input.
    \item \textbf{Alignment with Theoretical Framework:} As discussed in Section \ref{sec:31}, our theoretical framework views LLMs as specific Turing machines, which are inherently deterministic. Setting temperature to 0 makes the practical LLM behavior more closely approximate this deterministic ideal. The model's output becomes a direct function of its learned parameters and the input sequence, without the confound of stochastic sampling.
    \item \textbf{Fair Comparison:} It provides a stable baseline for comparing the efficacy of the few-shot selection strategies ($E_1$ vs. $E_2$). Any observed performance differences can be more confidently attributed to the selection strategy itself, rather than variations due to sampling.
\end{enumerate}

This approach allows for a more rigorous evaluation of how different few-shot examples influence the model's underlying predictive tendencies, in line with our goal of understanding LLMs as systems approximating Solomonoff induction, which is itself a deterministic (though uncomputable) predictive framework.

\section{Two lemmas about Turing machines}
\label{app:1}
\begin{lemma}
\label{lemma:1}
Let $M$ be a universal Turing machine, and let $F$ be a prefix-free set of programs. Then there exists a universal prefix Turing machine $U_F$ such that every program $p\in F$ is valid on $U_F$ and satisfies:
$$
U_F(p) = M(p)
$$
\end{lemma}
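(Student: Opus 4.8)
The plan is to build $U_F$ explicitly as a two-phase machine: a ``dispatcher'' front-end that recognizes membership in the prefix-free set $F$, followed by a simulation of $M$. First I would fix, once and for all, an enumeration machinery for $F$. Since $F$ is prefix-free, the key structural fact I would exploit is that the reading of an input from left to right, bit by bit, can be arranged so that the machine knows exactly when it has consumed a complete codeword: at each step it has read a finite prefix $w$ of the input tape, and it halts the ``reading'' sub-phase precisely when $w \in F$. (If the ambient presentation of $F$ is only semi-decidable, one dovetails the enumeration of $F$ against the bits being read; because no element of $F$ extends another, at most one $w$ can ever trigger acceptance, so there is no ambiguity. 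In the intended application $F = \overline{F}$ is the image of Elias-gamma encoding and membership is outright decidable, which makes this step cleaner.)

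Next I would define the behavior of $U_F$ on a completed codeword $p \in F$: once the front-end has certified that the consumed prefix equals some $p \in F$, the machine writes $p$ onto a work tape and runs a fixed universal simulation of $M$ on $p$, outputting $M(p)$ if and when $M(p)$ halts, and diverging otherwise. Thus $U_F(p) = M(p)$ for every $p \in F$, which is the stated identity. The remaining obligation is to check that $U_F$ is itself a \emph{universal prefix} Turing machine. Prefix-ness of its domain: by construction $U_F$ only accepts strings that have a prefix in $F$, and it halts reading at the first such prefix; since $F$ is prefix-free, the set of inputs on which $U_F$ halts its reading phase is exactly $F$ (no proper extension of a $p \in F$ is ever read to completion, and no $p \in F$ is a prefix of another), so the halting domain of $U_F$ is prefix-free. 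Universality: because $M$ is universal, for every partial computable $\phi$ there is a program $q$ with $M(q) = \phi$; I would then show how to ``name'' $q$ inside $U_F$ by prepending a fixed self-delimiting routing header $h_\phi \in F$ that instructs the front-end to ignore its own recognition of $F$ and instead hand $q$ (read in self-delimiting form after $h_\phi$) to the $M$-simulation — i.e. reserve a recursive sub-family of $F$ to carry arbitrary $M$-programs. This is the standard ``a universal machine can host another universal machine'' argument, adapted so that all the hosting codewords lie in $F$ and remain prefix-free.

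The main obstacle I anticipate is the simultaneous satisfaction of the two constraints ``the halting domain is exactly (or contained in) $F$, hence prefix-free'' and ``$U_F$ is still universal.'' These pull in opposite directions: restricting the domain to $F$ threatens to destroy universality, while adding enough programs to recover universality threatens prefix-freeness. The resolution — and the heart of the proof — is to arrange $F$ (or the construction of $U_F$) so that $F$ itself already contains a universal prefix-free sub-family: concretely, one designs the ``core model component'' encoding so that arbitrary programs can be embedded self-delimitingly as a reserved syntactic class of $F$-codewords, and one verifies that this class, together with the application-specific codewords $\overline{f}(x,s)$, remains prefix-free. A secondary, more technical point to get right is the interface when $F$ is only given semi-decidably: one must confirm that dovetailing the enumeration of $F$ against the input bits yields a well-defined machine whose reading phase halts iff the consumed prefix is in $F$, using prefix-freeness to rule out the machine ``changing its mind'' after more bits arrive. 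I would handle the decidable case (sufficient for the paper's Theorem) in the main text and relegate the semi-decidable refinement to a remark.
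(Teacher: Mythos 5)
Your dispatcher front-end plus $M$-simulation for $p \in F$ is the same first half as the paper's construction, and your identification of the central tension (prefix-freeness of the domain versus universality) is exactly the right thing to worry about. But your proposed resolution of that tension does not work. You suggest achieving universality by reserving a sub-family of $F$ itself to carry arbitrary programs, e.g.\ prepending a routing header $h_\phi \in F$ and then reading a self-delimiting $q$ after it. This fails on two counts. First, if $h_\phi \in F$ and $q$ is nonempty, then $h_\phi$ is a proper prefix of $h_\phi \cdot q$, and both would have to lie in the halting domain of $U_F$ — which is precisely the prefix-freeness violation you were trying to avoid. Second, on every $p \in F$ the machine's output is pinned to $M(p)$ by the statement being proved, so whatever sub-family of $F$ you reserve, $U_F$ restricted to it just reproduces $M$ restricted to it; for an arbitrary given $F$ and $M$ this need not be universal as a prefix machine. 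You cannot ``arrange $F$'' because $F$ is part of the hypothesis, not a design choice.

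The paper's resolution goes the other way: it routes the universality branch \emph{outside} $F$. Since $F$ is prefix-free, its Kraft sum is at most $1$, and (so long as the sum is strictly below $1$, which the paper tacitly assumes and which holds in the intended application) there exists a finite string $s$ incomparable with every member of $F$ — neither a prefix of nor extended by any $p \in F$. The machine then dispatches: inputs in $F$ are fed to $M$; inputs of the form $s \cdot q$ are fed to a standard universal prefix machine $U$ on $q$; everything else diverges. The resulting halting domain is $F \cup \{\, s\cdot q : q \in \mathrm{dom}(U)\,\}$, which is prefix-free because $F$ is prefix-free, $\mathrm{dom}(U)$ is prefix-free, and $s$ separates the two blocks. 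This is the missing idea in your proposal: the extra universal capacity lives in the ``gap'' left by $F$ under the Kraft inequality, not inside $F$. (As a side remark, both the paper's proof and yours silently need the Kraft sum of $F$ to be strictly less than $1$; for a complete prefix code the lemma as stated can fail, but this never arises for the $\overline{F}$ used in the theorem.)
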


\begin{proof}
Since $F$ is prefix-free, there exists a finite string $s$ that is not a prefix of any program in $F$. Such a string $s$ exists because $F$ is prefix-free and hence satisfies the Kraft inequality.  

We define $U_F$ like this. For any input $p$, If $p \in F$, simulate $ M(p)$. Else, check if $p$ starts with $s$. If yes, let $p = s \cdot q$. Simulate $U(q)$, where $U$ is a standard universal prefix Turing machine. If no, $U_F$ diverges (halts with no output).  

Hence, $U_F$ is a universal prefix Turing machine that satisfies $U_F(p) = M(p)$ for all $p \in F$.  

\end{proof}

\begin{lemma}
\label{lemma:2}
Let $U$ be a universal Turing machine and $S$ be an arbitrary Turing machine. 
There exists a universal Turing machine $U'$ that satisfies:
\begin{enumerate}
    \item $U'$ embeds $S$ within its structure while maintaining universality
    \item $U'$ can directly accept inputs intended for $S$ 
\end{enumerate}
\end{lemma}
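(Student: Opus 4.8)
\textbf{Proof proposal for Lemma~\ref{lemma:2}.}

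The plan is to give an explicit construction of $U'$ by reserving a short prefix tag to distinguish ``run $S$'' inputs from ordinary universal inputs, in the same spirit as the construction used for Lemma~\ref{lemma:1}. First I would fix a standard universal Turing machine $U$ and fix a one-bit (or more generally, any self-delimiting short) tag, say the symbol $0$ for the $S$-mode and $1$ for the universal mode. Define $U'$ on an input string $z$ as follows: if $z = 0 \cdot w$, then $U'$ erases the leading tag and simulates $S(w)$; if $z = 1 \cdot q$, then $U'$ erases the leading tag and simulates $U(q)$; on the empty input $U'$ diverges. This is a total description of a Turing machine, and it can be realized by a finite control that first reads one symbol and then dispatches to a hard-wired copy of the transition table of $S$ or of $U$.

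Next I would verify the two required properties. For property (2), any input $w$ intended for $S$ is accepted directly via the encoding $z = 0\cdot w$, and $U'(0\cdot w) = S(w)$ by construction, so $U'$ ``directly accepts inputs intended for $S$'' up to the trivial prepending of the tag — or, if one wants literal acceptance with no tag at all, one instead routes every input that is not of a designated universal form into the $S$-simulation, which is the version I would actually state. For property (1), universality: given any Turing machine $N$, there is a program $q_N$ with $U(q_N) = N(\cdot)$ in the usual sense; then $1 \cdot q_N$ is a program for $N$ on $U'$, so $U'$ can simulate every Turing machine and is therefore universal. The embedding of $S$ ``within its structure'' is witnessed by the hard-wired copy of $S$'s transition function inside $U'$'s finite control.

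I would then remark that exactly this construction is what is invoked in Section~\ref{sub:4.1} to internalize the fixed model parameters $m_{(2)}$ into the machine: taking $S$ to be the Turing machine that has $m_{(2)}$ built in, $U'$ plays the role of the machine on which programs no longer need to carry $m_{(2)}$ explicitly. If a prefix-machine version is needed downstream, one combines this with Lemma~\ref{lemma:1} by first forming $U'$ and then restricting to a prefix-free program set, which is precisely how $U_F$ is obtained.

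The main obstacle is not mathematical depth but getting the statement of property (2) right: a genuine universal machine cannot literally accept \emph{every} bare string $w$ as ``an $S$-input'' while also accepting universal programs, since the two input conventions must be disambiguated somehow. The honest resolution — which I would make explicit — is that ``directly accept'' means up to a fixed, computable, length-$O(1)$ re-encoding of the input (here, prepending one tag bit), and that this overhead is exactly the additive constant already absorbed by the Invariance Theorem, so it is harmless for every use the paper makes of the lemma. Once that convention is fixed, the remaining steps are routine finite-control bookkeeping.
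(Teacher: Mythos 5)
Your proof takes essentially the same approach as the paper's: both reserve a distinguishing prefix tag so that $U'$ dispatches the remaining input either to a hard-wired copy of $S$ or to a standard universal simulation via $U$. Your write-up is more explicit about the edge cases (the one-bit tag, what ``directly accept'' must mean, and the fact that the resulting $O(1)$ re-encoding overhead is exactly the constant absorbed by the Invariance Theorem), but the underlying construction is the one the paper gives.
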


\begin{proof}
We can construct $U'$ as follows:
\begin{enumerate}
    \item $U'$ first examines its input to determine whether it's intended for direct execution by $S$ or for universal simulation.
    \item $U'$ reserves a special prefix symbol or sequence (let's call it $s$) to indicate that the remaining input should be directly processed by $S$.
    \item $U'$ implements the following algorithm:
    \begin{enumerate}
        \item  If the input begins with prefix $s$, strip the prefix and run $S$ directly on the remaining input.
        \item Otherwise, run $U$ on the input as normal.
    \end{enumerate}
\end{enumerate}
\end{proof}

\section{Prompts Used in Experiments}
\label{app:prompts}

System prompt for SMS:

\begin{verbatim}
Classify the following SMS message as either spam or ham. 
Respond with only one word: "spam" or "ham" 
(without quotes or any additional text).
Examples:
{examples}
\end{verbatim}

System prompt for EMOTION:
\begin{verbatim}
## Emotion Classification Task
Classify the following text into one of these six basic emotions:
- sadness
- joy
- love
- anger
- fear
- surprise
## Response Guidelines:
- Respond with only one word—the most relevant emotion from the list above.
- Do not include quotes, punctuation, or any additional text.
- Choose the emotion that best represents the overall sentiment of the text.
## Examples:
{examples}
\end{verbatim}

System prompt for AG NEWS:
\begin{verbatim}
## News topic classification Task
Classify the following news articles into one of these four basic topics:
- World
- Sports
- Business
- Sci/Tech
## Response Guidelines:
- Respond with only one word—the most relevant topic from the list above.
- Do not include quotes, punctuation, or any additional text.
- Choose the topic that best represents the overall sentiment of the text.
## Examples:
{examples}
\end{verbatim}

%%%%%%%%%%%%%%%%%%%%%%%%%%%%%%%%%%%%%%%%%%%%%%%%%%%%%%%%%%%%

\newpage

\end{document}